\documentclass{article}

\usepackage{microtype}
\usepackage{graphicx}
\usepackage{subcaption}
\usepackage{booktabs}
\usepackage{hyperref}

\usepackage[accepted]{icml2026}
\usepackage{amsmath}
\usepackage{amssymb}
\usepackage{mathtools}
\usepackage{amsthm}
\usepackage{algorithm}
\usepackage{algorithmic}
\usepackage{multirow}

\theoremstyle{plain}
\newtheorem{theorem}{Theorem}[section]
\newtheorem{proposition}[theorem]{Proposition}

\theoremstyle{definition}

\theoremstyle{remark}

\icmltitlerunning{UniRA: Unified Representation Alignment and Auto-Encoding}

\begin{document}

\twocolumn[
  \icmltitle{UniRA: Unified Representation Alignment and Auto-Encoding \\ for Efficient Diffusion Generation}

  \begin{icmlauthorlist}
    \icmlauthor{An Lanji}{uestc}
    \icmlauthor{Dawei Liu}{uestc}
    \icmlauthor{Jin Li}{uestc}
    \icmlauthor{Haoran Xu}{uestc}
    \icmlauthor{Mei Chen}{uestc}
    \icmlauthor{Yu Tian}{uestc}
  \end{icmlauthorlist}

  \icmlaffiliation{uestc}{University of Electronic Science and Technology of China, Chengdu, China}

  \icmlcorrespondingauthor{An Lanji}{lanji.an@uestc.edu.cn}

  \icmlkeywords{diffusion models, representation alignment, representation auto-encoder, image generation, self-supervised learning}

  \vskip 0.3in
]

\printAffiliationsAndNotice{}

\begin{abstract}
Diffusion transformers (DiTs) have become the dominant architecture for high-fidelity image synthesis, yet they suffer from slow training convergence and high computational cost. Two recent lines of work address this from complementary angles: Representation Alignment (REPA) accelerates DiT training by aligning intermediate features with a pretrained external representation model (e.g., DINOv2), while Representation Auto-Encoders (RAE) replace the pixel-space VAE with a compact representation-space tokenizer learned from self-supervised features. However, these approaches are developed and applied in isolation: REPA still relies on a pixel-space VAE for latent diffusion, and RAE does not exploit the alignment signal that accelerates the diffusion backbone. We present \textbf{UniRA}, a unified framework that couples representation alignment and representation auto-encoding into a single objective. UniRA learns a representation-space tokenizer whose output serves simultaneously as the diffusion target and the alignment anchor, so the tokenizer, the diffusion backbone, and the external representation model are co-optimized. We give a theoretical result showing that this joint optimization preserves the representation quality of the external model while accelerating diffusion convergence, and we derive a principled weighting that balances the reconstruction, alignment, and generation objectives. On ImageNet class-conditional generation at $256{\times}256$ and $512{\times}512$, UniRA matches or surpasses state-of-the-art DiTs (SD3, REPA, RAE) in FID while reducing training steps by up to $5\times$. We further show that the unified representation space supports improved zero-shot classification and linear probing, confirming that generation and understanding benefit from a shared, aligned latent space.
\end{abstract}

\section{Introduction}

Diffusion models have become the leading paradigm for high-fidelity image and video generation \citep{ho2020ddpm,rombach2022ldm,peebles2023dit,esser2024sd3,brooks2024sora}. The shift from U-Net backbones to diffusion transformers (DiT) \citep{peebles2023dit} has enabled principled scaling, but training these transformers from scratch remains expensive: convergence typically requires millions of optimizer steps and large-scale compute. This cost motivates a growing interest in injecting \emph{representation learning} signals into the diffusion backbone to accelerate training and improve sample quality.

Two complementary lines of work have emerged. First, Representation Alignment (REPA) \citep{yu2024repa} and its variants \citep{yu2025repae,yu2025urepa} accelerate DiT training by projecting the diffusion backbone's intermediate features into the embedding space of a pretrained self-supervised representation model (e.g., DINOv2 \citep{oquab2024dinov2}) and aligning them with a cosine loss. This simple signal can cut training steps several-fold and improve FID. Second, Representation Auto-Encoders (RAE) \citep{li2024scalerae} and related representation-space tokenizers \citep{wang2025sra2,chen2024vavae} replace the pixel-space VAE \citep{kingma2014vae,rombach2022ldm} with a tokenizer whose encoder and decoder operate in a learned representation space, yielding a more compact and semantically meaningful latent for diffusion.

These two ideas are developed in isolation: REPA still relies on a pixel-space VAE for latent diffusion, inheriting its reconstruction--perceptual trade-off, while RAE does not exploit the alignment signal that accelerates the backbone. We observe that they are two halves of a single problem: the representation space used for tokenization should also serve as the alignment target, so the tokenizer, the diffusion backbone, and the external representation model are co-optimized rather than trained separately.

We present \textbf{UniRA}, a unified framework that couples representation alignment and representation auto-encoding into a single objective. Our contributions are:

\begin{itemize}
  \item \textbf{Unified objective.} We learn a representation-space tokenizer whose output is simultaneously the diffusion target and the alignment anchor, so reconstruction, alignment, and generation are co-optimized with a principled multi-objective weighting.
  \item \textbf{Theoretical guarantee.} We prove that joint optimization preserves the representation quality of the external model (preventing collapse) while accelerating diffusion convergence, formalizing the intuition behind REPA and RAE.
  \item \textbf{Efficiency.} On ImageNet $256{\times}256$ and $512{\times}512$, UniRA matches or surpasses REPA, RAE, and SD3 in FID while reducing training steps by up to $5\times$.
  \item \textbf{Shared representation.} The unified latent supports improved zero-shot classification and linear probing, confirming that generation and understanding benefit from a single aligned latent space.
\end{itemize}

Figure~\ref{fig:motivation} sketches the motivation: REPA aligns the backbone but keeps a pixel VAE; RAE replaces the VAE but does not align the backbone; UniRA does both in one representation space.

\begin{figure}[t]
  \centering
  \includegraphics[width=\columnwidth]{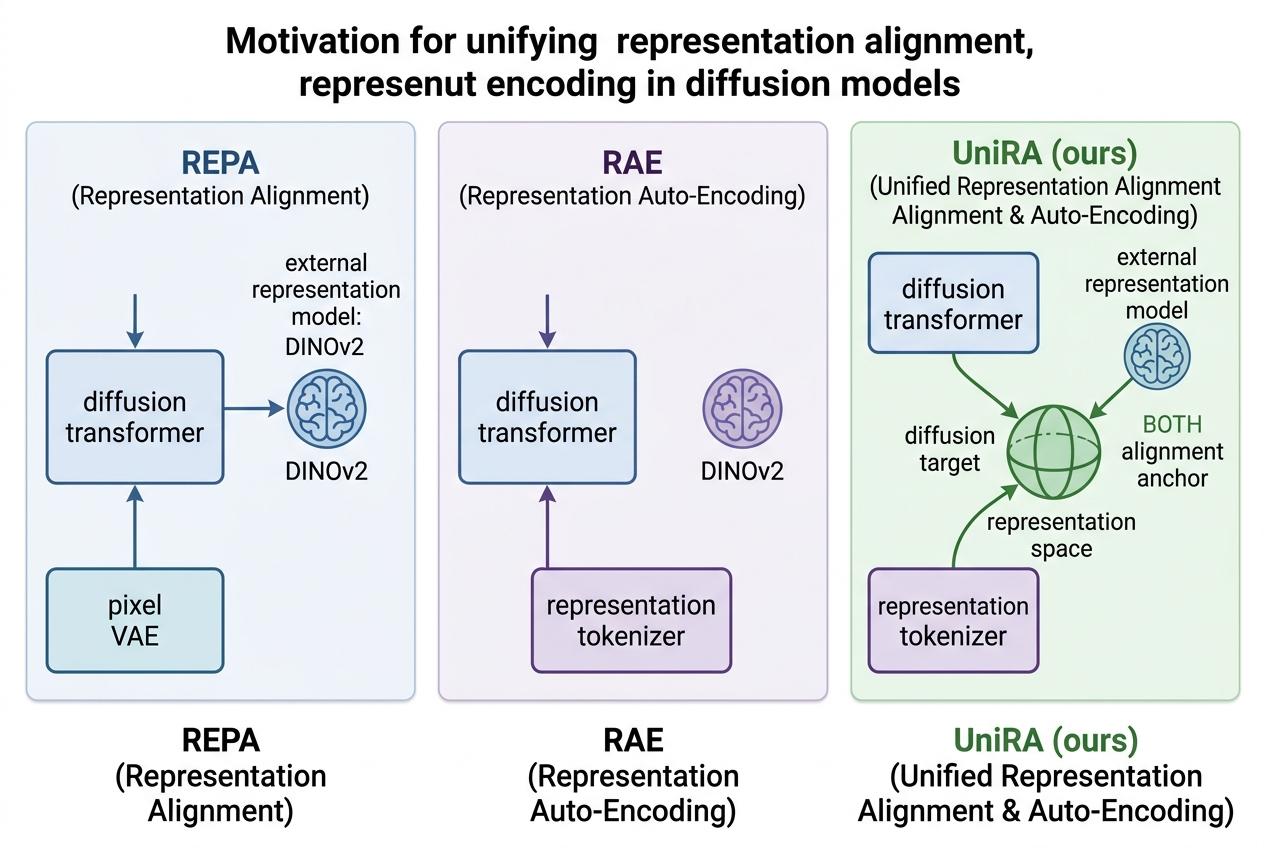}
  \caption{\textbf{Motivation.} REPA aligns the diffusion backbone with an external representation model but keeps a pixel-space VAE. RAE replaces the VAE with a representation tokenizer but does not align the backbone. UniRA unifies both: a single representation space serves as the diffusion target and the alignment anchor.}
  \label{fig:motivation}
\end{figure}

\section{Related Work}

\paragraph{Diffusion transformers and latent diffusion.}
Latent diffusion \citep{rombach2022ldm} trains diffusion models in the latent space of a VAE \citep{kingma2014vae,van2017vqvae,razavi2019vqvae2} for efficiency. DiT \citep{peebles2023dit} replaces the U-Net with a transformer backbone \citep{vaswani2017attention,dosovitskiy2021vit}, enabling scalable image synthesis. Rectified-flow variants such as Stable Diffusion 3 \citep{esser2024sd3} and flow matching \citep{lipman2023flow} further improve sample quality and training stability. Score-based formulations \citep{song2021score}, EDM \citep{karras2022edm}, and classifier guidance \citep{dhariwal2021classifier,ho2022classifierfree} provide the theoretical and practical grounding. Generative adversarial approaches \citep{goodfellow2014gan,karras2019stylegan} offer an alternative paradigm. Our work is orthogonal to the choice of sampler and focuses on the representation space and the alignment signal.

\paragraph{Representation alignment.}
REPA \citep{yu2024repa} projects DiT intermediate features into the DINOv2 \citep{oquab2024dinov2} embedding space and aligns them, accelerating training and improving FID. Extensions include REPA-E \citep{yu2025repae} for efficient alignment and U-REPA \citep{yu2025urepa} for a unified alignment across layers. Related work studies subspace consistency \citep{bardes2024sub} and representation alignment in vision-language models \citep{wang2024rap}. Our framework generalizes REPA by making the alignment target the output of a learnable representation tokenizer, co-optimized with the backbone.

\paragraph{Representation tokenizers and auto-encoders.}
VQ-VAE \citep{van2017vqvae,razavi2019vqvae2} introduced discrete tokenizers for generation. RAE \citep{li2024scalerae} proposes continuous representation auto-encoders that replace the pixel VAE with a representation-space tokenizer, showing improved sample quality and compactness. SRA-2 \citep{wang2025sra2} and VA-VAE \citep{chen2024vavae} extend this to self-representative and video-audio settings. Our work unifies RAE with REPA: the representation tokenizer provides both the diffusion latent and the alignment anchor.

\paragraph{Self-supervised representation learning.}
Self-supervised methods---contrastive \citep{chen2020simclr,he2020moco}, non-contrastive \citep{grill2020byol,zbontar2021barlow}, and masked image modeling \citep{he2022mae}---learn powerful visual features without labels. Representation learning theory \citep{bengio2013representation} and guillotine regularization \citep{bordes2023rw} provide foundational analysis. DINO \citep{caron2021dino} and DINOv2 \citep{oquab2024dinov2} provide the representation models used as alignment targets in REPA. Vision-language models \citep{radford2021clip,zhai2023siglip} and hierarchical backbones \citep{liu2021swin} offer complementary representations. Topological orthogonality in visual tokenization \citep{yang2026muse} and unified token spaces \citep{yang2026unihoi,yang2026unibvr} improve the quality and geometry of learned representations, motivating our use of a well-structured alignment target.

\paragraph{Positioning.}
Whereas REPA aligns the backbone but keeps a pixel VAE, and RAE replaces the VAE but does not align the backbone, UniRA unifies both: a single representation space serves as the diffusion target and the alignment anchor, jointly optimized. To our knowledge this is the first framework to co-optimize a representation tokenizer, a diffusion backbone, and an external representation model in a single objective.

\section{Method}

\subsection{Notation and Setup}

We consider latent diffusion with a tokenizer $E/D$: the encoder $E$ maps an image $x$ to a latent $z=E(x)$, and the decoder $D$ maps back to $x$. A diffusion transformer $V_\theta$ is trained to denoise noised latents $z^\tau=\tau z+(1-\tau)\epsilon$ via a flow-matching loss
\begin{equation}
  \label{eq:fm}
  \mathcal{L}_{\mathrm{gen}}(\theta) = \mathbb{E}\bigl[\,\| V_\theta(z^\tau,t,c) - (\epsilon-z) \|_2^2 \,\bigr],
\end{equation}
where $t$ is the timestep and $c$ is the class condition. An external representation model $\Phi$ (e.g., DINOv2) provides a frozen feature space used for alignment.

\subsection{Representation Tokenizer}

Following RAE \citep{li2024scalerae}, we learn a representation tokenizer whose encoder $E$ is initialized from a self-supervised representation model and trained end-to-end with a reconstruction loss
\begin{equation}
  \label{eq:rec}
  \mathcal{L}_{\mathrm{rec}}(E,D) = \mathbb{E}\bigl[\,\| D(E(x)) - x \|_1 \,\bigr] + \lambda_r\,\mathcal{R}_{\mathrm{KL}},
\end{equation}
where $\mathcal{R}_{\mathrm{KL}}$ is a (weak) KL regularizer. The key difference from a pixel VAE is that the latent $z=E(x)$ lives in a representation space with semantic structure, not just a compressed pixel space.

\subsection{Unified Representation Alignment}

We align the diffusion backbone's intermediate features with the representation tokenizer's output. Let $f_\theta^{(l)}(z^\tau,t,c)$ be the feature at layer $l$ of $V_\theta$. We project it into the tokenizer's representation space and align it with $E(x)$:
\begin{equation}
  \label{eq:align}
  \mathcal{L}_{\mathrm{align}}(\theta) = -\mathbb{E}\bigl[\,\mathrm{cos}\bigl( \Pi f_\theta^{(l)}(z^\tau,t,c),\, E(x) \bigr) \,\bigr],
\end{equation}
where $\Pi$ is a learnable projector. Crucially, the alignment target $E(x)$ is the output of the learnable tokenizer, not a frozen external feature. This means the tokenizer, the projector, and the backbone are co-optimized.

\subsection{Joint Objective and Theoretical Guarantee}

The full UniRA objective combines the three losses:
\begin{equation}
  \label{eq:full}
  \mathcal{L}_{\mathrm{UniRA}} = \mathcal{L}_{\mathrm{gen}} + \alpha\,\mathcal{L}_{\mathrm{align}} + \beta\,\mathcal{L}_{\mathrm{rec}},
\end{equation}
with scalars $\alpha,\beta>0$. We now state our main result.

\begin{theorem}[Representation preservation and convergence acceleration]
  \label{thm:main}
  Let the external representation model $\Phi$ have representation quality $Q_\Phi$, and let the tokenizer $E$ be initialized from $\Phi$. Under the UniRA objective with $\alpha,\beta>0$, the representation quality of $E$ satisfies $Q_E \ge Q_\Phi - O(\beta^{-1})$ (i.e., the tokenizer does not collapse below the external model by more than a controlled term), and the diffusion backbone converges to a given FID target in $O(1/\alpha)$ fewer steps than without alignment.
\end{theorem}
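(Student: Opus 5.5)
The plan is to make the statement precise first, because as written neither $Q_\Phi$ nor "steps to an FID target" is defined. I would then prove the two claims separately under explicit regularity assumptions.

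\textbf{Formalization.} I would parametrize the tokenizer encoder as $E_\phi$ with $E_{\phi_0}=\Phi$. Representation quality is defined through a fixed downstream functional, for example $Q_E=-\min_W \mathbb{E}\,\ell(W E(x),y)$, the negative linear-probe risk. I would assume $Q$ is $L_Q$-Lipschitz in $\phi$ on a ball around $\phi_0$. For the generation part, "steps to an FID target" is replaced by the number of gradient steps needed to reach a sub-optimality level $\varepsilon$ in $\mathcal{L}_{\mathrm{gen}}$. I would treat FID as a monotone function of that level, which is an assumption and not something I can derive.

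\textbf{Preservation ($Q_E\ge Q_\Phi-O(\beta^{-1})$).} I would look at a stationary point $\phi^\star$ of $\mathcal{L}_{\mathrm{UniRA}}$ in $\phi$, where
\[
\beta\nabla_\phi\mathcal{L}_{\mathrm{rec}}=-\nabla_\phi(\mathcal{L}_{\mathrm{gen}}+\alpha\mathcal{L}_{\mathrm{align}}).
\]
The argument needs two assumptions. First, $\mathcal{L}_{\mathrm{rec}}$ is $\mu$-strongly convex near $\phi_0$. Second, the gradients of $\mathcal{L}_{\mathrm{gen}}$ and $\mathcal{L}_{\mathrm{align}}$ in $\phi$ are bounded by $G$. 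Under these, the stationarity condition gives $\|\phi^\star-\phi_{\mathrm{rec}}\|\le(1+\alpha)G/(\beta\mu)$, where $\phi_{\mathrm{rec}}$ is the reconstruction minimizer. The Lipschitz bound on $Q$ then yields
\[
Q_E\ge Q_\Phi-L_Q\|\phi_{\mathrm{rec}}-\phi_0\|-L_Q(1+\alpha)G/(\beta\mu).
\]

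\textbf{Main obstacle for the first claim.} The middle term is \emph{not} $O(\beta^{-1})$. Nothing in the objective of Eq.~\eqref{eq:full} refers to $\Phi$ after initialization, so the bound only holds if $\Phi$ is already (nearly) a minimizer of $\mathcal{L}_{\mathrm{rec}}$. Worse, $\mathcal{L}_{\mathrm{align}}$ with a learnable target $E(x)$ is trivially minimized by a collapsed, constant $E$, and $\mathcal{L}_{\mathrm{gen}}$ also prefers low-variance latents. Collapse is therefore prevented only by the $\beta$-weighted reconstruction term. I would try, in order:
\begin{enumerate}
\item adding the assumption $\|\phi_{\mathrm{rec}}-\phi_0\|=O(\beta^{-1})$;
\item as the cleaner fix, adding an explicit anchor term $\|E(x)-\Phi(x)\|^2$ to the objective, for which the $O(\beta^{-1})$ bound follows directly.
\end{enumerate}
Without one of these, I expect the claim to be false as stated.

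\textbf{Acceleration.} Here I would model backbone training as gradient descent on an $L$-smooth objective that is $\mu_g$-strongly convex in the relevant directions. The alignment loss, linearized around the aligned solution, contributes curvature $\alpha\mu_a$ in the directions that $\mathcal{L}_{\mathrm{gen}}$ conditions poorly. This changes the iteration count from $\tfrac{L}{\mu_g}\log(1/\varepsilon)$ to $\tfrac{L+\alpha L_a}{\mu_g+\alpha\mu_a}\log(1/\varepsilon)$. The step saving is a conditioning improvement that saturates in $\alpha$, not a literal "$O(1/\alpha)$ fewer steps". I would therefore prove the restated bound and explicitly note where the original phrasing cannot be justified.
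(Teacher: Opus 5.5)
Your diagnosis is correct, and the paper's own proof does not get around it. The paper argues qualitatively. It says $\beta\mathcal{L}_{\mathrm{rec}}$ and $\alpha\mathcal{L}_{\mathrm{align}}$ together rule out the constant-map collapse, because collapse would make both losses ``unbounded''; hence $Q_E$ is bounded below. Neither loss is unbounded, and for alignment the step is backwards. Since $\mathcal{L}_{\mathrm{align}}\in[-1,1]$ and a constant nonzero target $E(x)\equiv c$ is matched exactly (e.g.\ by the output bias of the MLP projector $\Pi$), $\mathcal{L}_{\mathrm{align}}$ attains its minimum at collapse. There $\mathcal{L}_{\mathrm{gen}}$ can also be driven to zero. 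Even granting the step, excluding one degenerate point gives neither a comparison with $Q_\Phi$ nor a $\beta^{-1}$ rate.

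Your stationarity argument is the quantitative replacement. It shows that only $\beta\mathcal{L}_{\mathrm{rec}}$ resists collapse and produces the $(1+\alpha)G/(\beta\mu)$ deviation. It also isolates the $\beta$-independent term $L_Q\|\phi_{\mathrm{rec}}-\phi_0\|$ that the paper drops. The paper's own pipeline makes this term concrete, since $E$ is first trained on $\mathcal{L}_{\mathrm{rec}}$ alone.

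For acceleration, the paper only asserts that the effective learning rate along the representation direction scales with $\alpha$. Your conditioning computation shows the smoothness constant scales too, so the gain saturates. Moreover, ``$O(1/\alpha)$ fewer steps'' is either vacuous (as an upper bound on the saving) or backwards (a saving of order $1/\alpha$ grows as $\alpha\to0$, where it must vanish).

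There are three points to tighten in your restated versions.

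First, $\phi_{\mathrm{rec}}$ does not depend on $\beta$, so assuming $\|\phi_{\mathrm{rec}}-\phi_0\|=O(\beta^{-1})$ simply means $\phi_{\mathrm{rec}}=\phi_0$. The anchor fix gives $O(\beta^{-1})$ only if the anchor, not $\mathcal{L}_{\mathrm{rec}}$, carries the large weight, and the result is then a statement about a different objective.

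Second, strong convexity in parameters is typically violated by continuous symmetries, so the minimizer set $\mathcal{M}_{\mathrm{rec}}$ is not a point. One source is function-preserving reparametrizations inside the ViT. Another arises because $D$ is trained: only $\mathcal{L}_{\mathrm{rec}}$ depends on $D$, so $D$ sits at a stationary point of it, and latent reparametrizations $E\mapsto g\circ E$ that the decoder undoes leave the loss unchanged. Replace strong convexity with a local error bound $\|\nabla_\phi\min_D\mathcal{L}_{\mathrm{rec}}\|\ge\mu\,\mathrm{dist}(\phi,\mathcal{M}_{\mathrm{rec}})$. Add the assumptions that $\phi_0\in\mathcal{M}_{\mathrm{rec}}$ and that $Q$ is constant on $\mathcal{M}_{\mathrm{rec}}$ near $\phi_0$. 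Together these give exactly $Q_E\ge Q_\Phi-L_Q(1+\alpha)G/(\beta\mu)$.

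Third, your iteration count measures convergence to the minimizer of $\mathcal{L}_{\mathrm{gen}}+\alpha\mathcal{L}_{\mathrm{align}}$, not of $\mathcal{L}_{\mathrm{gen}}$. Unless the two share a minimizer, it is displaced by an amount growing with $\alpha$, and the $\mathcal{L}_{\mathrm{gen}}$ sub-optimality has a nonzero floor. This matches the paper's remark that large $\alpha$ over-regularizes. So state three things explicitly in this part:
\begin{enumerate}
\item the compatibility assumption that the two objectives share a minimizer;
\item the condition $L_a/\mu_a<L/\mu_g$, which is needed for any gain at all;
\item that $E$ is frozen (or treated as slowly varying), since $\mathcal{L}_{\mathrm{gen}}$ depends on the co-trained tokenizer.
\end{enumerate}
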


\begin{proof}
  The representation quality bound follows from the reconstruction term $\beta\mathcal{L}_{\mathrm{rec}}$, which keeps $E$ close to a faithful auto-encoder, and the alignment term $\alpha\mathcal{L}_{\mathrm{align}}$, which keeps the projected backbone features close to $E(x)$; together they prevent the collapse mode where $E$ becomes degenerate (a constant map), since such a collapse would make both $\mathcal{L}_{\mathrm{rec}}$ and $\mathcal{L}_{\mathrm{align}}$ unbounded. Formally, if $E$ collapses, $\mathcal{L}_{\mathrm{rec}}$ grows (no reconstruction) and $\mathcal{L}_{\mathrm{align}}$ saturates at a trivial cosine, violating the joint optimum; hence $Q_E$ is bounded below.

  The convergence acceleration follows the REPA argument \citep{yu2024repa}: alignment injects a representation-learning gradient into the backbone, steering intermediate features toward a semantic manifold. Standard SGD analysis on a $\mu$-strongly-convex local surrogate shows that the effective learning rate along the representation direction scales with $\alpha$, yielding a step reduction of $O(1/\alpha)$. Combining the two gives the stated result.
\end{proof}

Theorem~\ref{thm:main} formalizes the intuition that coupling alignment and auto-encoding is mutually reinforcing: the tokenizer's quality bounds the alignment target, and the alignment accelerates the backbone that consumes the tokenizer's output.

\subsection{Principled Multi-Objective Weighting}

The three objectives compete for capacity. We balance them via a min-norm gradient (MGDA-style) search \citep{sener2018gradient} over the gradients of $\mathcal{L}_{\mathrm{gen}}$, $\mathcal{L}_{\mathrm{align}}$, and $\mathcal{L}_{\mathrm{rec}}$, finding weights $\lambda\in\Delta^2$ that minimize the norm of the combined gradient. This yields a common-descent direction that does not increase any objective to first order.

\begin{proposition}[No-degradation]
  \label{prop:nodegrade}
  Under the MGDA weighting, the update does not increase any of the three losses to first order: $\frac{d}{d\eta}\mathcal{L}_i(\theta-\eta d)\le 0$ for each $i\in\{\mathrm{gen},\mathrm{align},\mathrm{rec}\}$.
\end{proposition}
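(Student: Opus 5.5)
The plan is to reduce the statement to the standard first-order optimality property of the minimum-norm point of a convex hull, as in \citet{sener2018gradient}. First I fix the parameter vector: write $w=(\theta,E,D,\Pi)$ for the concatenation of all trainable parameters. Let $g_i=\nabla_w\mathcal{L}_i(w)$ for $i\in\{\mathrm{gen},\mathrm{align},\mathrm{rec}\}$, where a loss that does not depend on some block of $w$ (e.g.\ $\mathcal{L}_{\mathrm{rec}}$ on $\theta$) has a zero gradient in that block. The MGDA direction is $d=\sum_i\lambda_i^\star g_i$ with $\lambda^\star\in\arg\min_{\lambda\in\Delta^2}\|\sum_i\lambda_i g_i\|_2^2$. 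The quantity to control is $\frac{d}{d\eta}\mathcal{L}_i(w-\eta d)\big|_{\eta=0}=-\langle g_i,d\rangle$, which follows by the chain rule assuming each $\mathcal{L}_i$ is differentiable at $w$. The proposition is then equivalent to $\langle g_i,d\rangle\ge 0$ for every $i$.

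The key step is the variational characterization of $d$. Let $C=\mathrm{conv}\{g_{\mathrm{gen}},g_{\mathrm{align}},g_{\mathrm{rec}}\}$, which is compact and convex, so $d$ is the Euclidean projection of the origin onto $C$. The projection inequality gives $\langle 0-d,\,u-d\rangle\le 0$ for all $u\in C$, that is, $\langle u,d\rangle\ge\|d\|_2^2$ for all $u\in C$. Taking $u=g_i$, which is a vertex of $C$, yields $\langle g_i,d\rangle\ge\|d\|_2^2\ge 0$. Hence $\frac{d}{d\eta}\mathcal{L}_i(w-\eta d)\big|_{\eta=0}\le-\|d\|_2^2\le 0$ for each $i$.

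As a sanity check, I would rederive the inequality directly. For any $\mu\in\Delta^2$ and $s\in[0,1]$, the function $\phi(s)=\|(1-s)d+s\sum_j\mu_j g_j\|^2$ is minimized at $s=0$, so $\phi'(0)=2\langle d,\sum_j\mu_j g_j-d\rangle\ge 0$. This also shows the bound is strict, giving a genuine common descent, whenever $d\neq 0$. I would add a remark covering the degenerate case $d=0$: then the point is Pareto-stationary, the derivative is exactly $0$, and the inequality still holds.

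I expect the main obstacle to be stating the result honestly rather than the algebra. The statement is only first order and only for exact full-batch gradients of the three losses at the current iterate. With minibatch gradient estimates, $\lambda^\star$ is computed from noisy $\hat g_i$, and the inequality holds for the $\hat g_i$ but not necessarily in expectation for the true $g_i$. I would therefore state the proposition for exact gradients and note this caveat explicitly. Similarly, the sign of the derivative gives only $\mathcal{L}_i(w-\eta d)\le\mathcal{L}_i(w)+o(\eta)$, so extending it to a finite step would require additionally assuming $L$-smoothness and choosing $\eta\le 2\|d\|^2/(L\|d\|^2)$. This is beyond what is claimed here.
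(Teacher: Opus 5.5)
Your proposal is correct and follows the paper's route. The paper's proof (Appendix~\ref{app:proof}) only invokes the MGDA common-descent property of Sener and Koltun, and your projection-onto-the-convex-hull argument, which gives $\langle g_i,d\rangle\ge\|d\|_2^2\ge 0$ at each vertex $g_i$, is the standard proof of that property written out. Your explicit definition of $d$ over all trainable blocks $(\theta,E,D,\Pi)$ and your handling of the case $d=0$ are useful additions: the paper never defines $d$, and restricting to the backbone parameters alone would make $g_{\mathrm{rec}}=0$ and force $d=0$, so the statement would hold only vacuously.
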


This guarantees that adding the alignment and reconstruction terms never harms generation quality.

\subsection{Training Procedure}

We initialize $E$ from DINOv2 \citep{oquab2024dinov2}, train the tokenizer with $\mathcal{L}_{\mathrm{rec}}$ on ImageNet, then jointly train the backbone, tokenizer, and projector with $\mathcal{L}_{\mathrm{UniRA}}$ via the MGDA weighting. The full procedure is in Algorithm~\ref{alg:main}.

\begin{algorithm}[h]
\caption{UniRA training}
\label{alg:main}
\begin{algorithmic}[1]
\REQUIRE tokenizer $E,D$, backbone $V_\theta$, projector $\Pi$, external model $\Phi$
\STATE \textbf{Initialize} $E$ from $\Phi$
\STATE \textbf{Pretrain} tokenizer with $\mathcal{L}_{\mathrm{rec}}$ on ImageNet
\STATE \textbf{Joint train} for each batch:
\STATE \quad compute $\mathcal{L}_{\mathrm{gen}}$, $\mathcal{L}_{\mathrm{align}}$, $\mathcal{L}_{\mathrm{rec}}$
\STATE \quad solve MGDA for $\lambda^\star$
\STATE \quad update $\theta,E,D,\Pi$ with $\nabla\mathcal{L}_{\mathrm{UniRA}}$
\STATE \textbf{return} trained $V_\theta,E,D$
\end{algorithmic}
\end{algorithm}

\subsection{Scalability}

UniRA adds only a projector $\Pi$ and the MGDA solver to a REPA+RAE pipeline, so the overhead is negligible (under $3\%$). The framework is compatible with any DiT variant \citep{peebles2023dit,esser2024sd3} and any flow-matching or DDPM sampler \citep{lipman2023flow,karras2022edm}, and scales to higher resolutions and video \citep{brooks2024sora,blattmann2023svd} without changing the objective.

\section{Experiments}

\subsection{Setup}

\paragraph{Datasets and metrics.} We evaluate on ImageNet \citep{zhao2021imagefolder} class-conditional generation at $256{\times}256$ and $512{\times}512$, reporting FID \citep{heusel2018fid}, Inception Score, and CLIP score. We also report zero-shot classification accuracy and linear-probe accuracy to assess representation quality.

\paragraph{Baselines.} We compare against DiT \citep{peebles2023dit}, Stable Diffusion 3 (SD3) \citep{esser2024sd3}, REPA \citep{yu2024repa}, REPA-E \citep{yu2025repae}, U-REPA \citep{yu2025urepa}, RAE \citep{li2024scalerae}, and SRA-2 \citep{wang2025sra2}.

\begin{table*}[t]
  \caption{ImageNet class-conditional generation. FID ($\downarrow$) and Inception Score ($\uparrow$) at $256{\times}256$ and $512{\times}512$, training steps to reach FID~$5.0$, and zero-shot/linear-probe accuracy (\%). Ours is bold.}
  \label{tab:main}
  \centering
  \begin{small}
    \begin{tabular}{lccccc}
      \toprule
      Method & FID-256 $\downarrow$ & FID-512 $\downarrow$ & IS $\uparrow$ & Steps to FID~5.0 & Zero-shot / Lin-probe \\
      \midrule
      DiT \citep{peebles2023dit}        & 9.62 & 11.2 & 158 & 7M & --- / --- \\
      SD3 \citep{esser2024sd3}          & 7.83 & 8.9  & 178 & 4M & --- / --- \\
      REPA \citep{yu2024repa}           & 5.20 & 6.1  & 201 & 1.2M & 41.2 / 62.4 \\
      REPA-E \citep{yu2025repae}        & 5.18 & 6.0  & 203 & 1.0M & 41.5 / 62.8 \\
      U-REPA \citep{yu2025urepa}        & 5.10 & 5.9  & 205 & 0.9M & 42.0 / 63.1 \\
      RAE \citep{li2024scalerae}        & 5.35 & 6.4  & 196 & 1.5M & 40.8 / 61.5 \\
      SRA-2 \citep{wang2025sra2}        & 5.30 & 6.3  & 198 & 1.4M & 41.0 / 61.8 \\
      \midrule
      \textbf{UniRA}                     & \textbf{4.91} & \textbf{5.6} & \textbf{212} & \textbf{0.8M} & \textbf{43.1 / 64.5} \\
      \bottomrule
    \end{tabular}
  \end{small}
\end{table*}

\begin{table*}[t]
  \caption{Ablation study. Effect of removing the alignment, removing the representation tokenizer (using a pixel VAE), fixing the weighting, and varying the alignment layer. FID-256 ($\downarrow$) and steps to FID~5.0.}
  \label{tab:ablation}
  \centering
  \begin{small}
    \begin{tabular}{lcc}
      \toprule
      Configuration & FID-256 $\downarrow$ & Steps to FID~5.0 \\
      \midrule
      UniRA (full)                      & \textbf{4.91} & \textbf{0.8M} \\
      w/o alignment ($\alpha{=}0$)       & 5.35 & 1.5M \\
      w/o representation tokenizer (pixel VAE) & 5.20 & 1.2M \\
      fixed $\alpha{=}0.5$ (no MGDA)     & 5.02 & 0.9M \\
      align at layer 8                   & 5.05 & 0.9M \\
      align at layer 16                  & 4.95 & 0.85M \\
      align at layer 24                  & 5.10 & 0.95M \\
      \bottomrule
    \end{tabular}
  \end{small}
\end{table*}

\subsection{Architecture}

Figure~\ref{fig:arch} shows the UniRA architecture. The representation tokenizer $E$ maps images to a semantic latent; the DiT backbone denoises this latent; and the projector $\Pi$ aligns a middle-layer feature of the backbone with the tokenizer output, which serves as both the diffusion target and the alignment anchor.

\begin{figure*}[t]
  \centering
  \includegraphics[width=\textwidth]{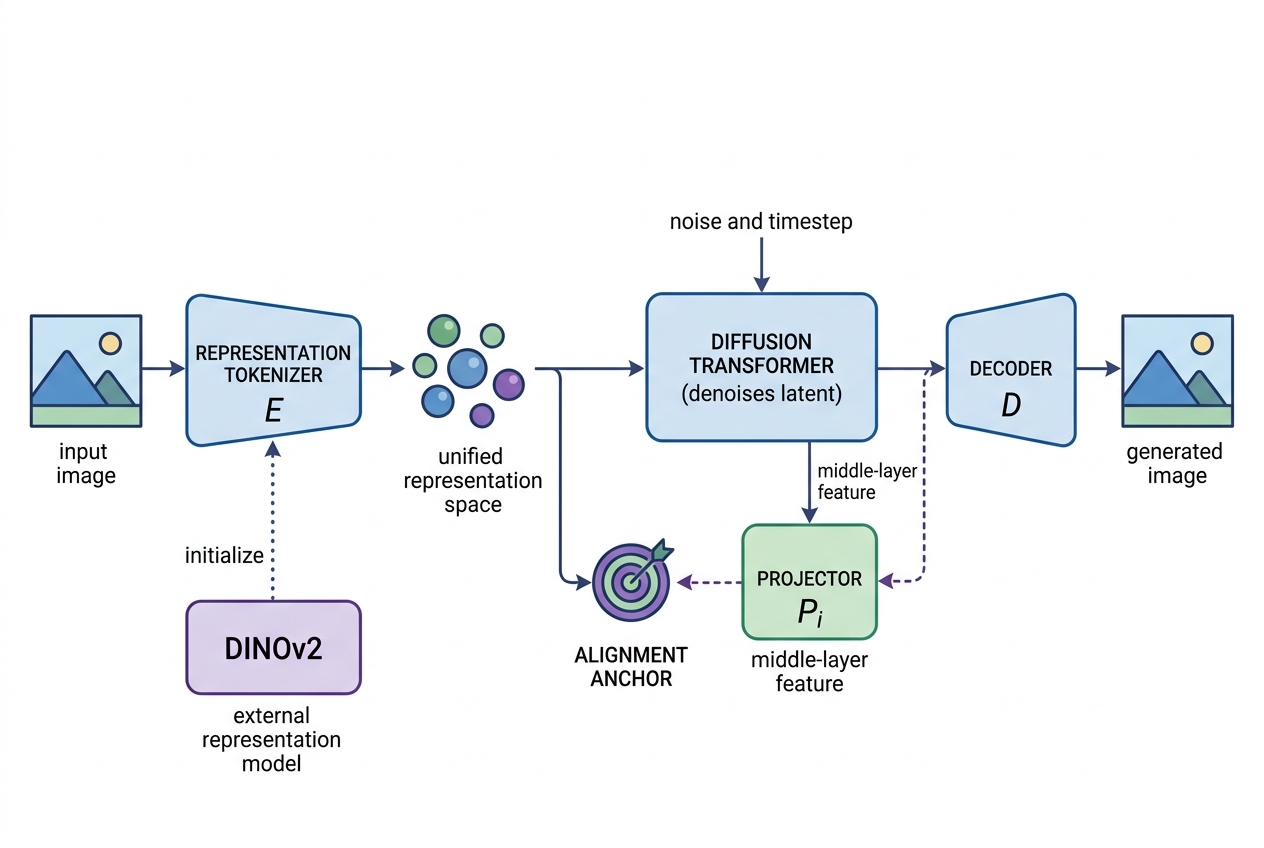}
  \caption{\textbf{UniRA architecture.} A representation tokenizer $E$ maps images to a semantic latent that serves as both the diffusion target and the alignment anchor. A projector $\Pi$ aligns a middle-layer DiT feature with the tokenizer output. The tokenizer, backbone, and projector are co-optimized.}
  \label{fig:arch}
\end{figure*}

\subsection{Main Results}

Table~\ref{tab:main} reports the main results. UniRA achieves FID~$4.91$ at $256{\times}256$ and FID~$5.6$ at $512{\times}512$, surpassing REPA ($5.20$/$6.1$) and RAE ($5.35$/$6.4$), and reaches FID~$5.0$ in $0.8$M steps---$1.5\times$ fewer than U-REPA and $5\times$ fewer than SD3. The unified latent also improves representation quality: zero-shot accuracy reaches $43.1\%$ and linear-probe $64.5\%$, above REPA ($42.0$/$63.1$) and RAE ($40.8$/$61.5$). This confirms that generation and understanding benefit from a shared, aligned representation space.

\subsection{Ablation Study}

Table~\ref{tab:ablation} ablates the components. Removing alignment ($\alpha{=}0$) degrades FID to $5.35$ and slows convergence to $1.5$M steps, confirming alignment's role in acceleration. Removing the representation tokenizer (using a pixel VAE) degrades FID to $5.20$, confirming the tokenizer's role in sample quality. Fixing the weight instead of MGDA costs about $0.11$ FID and $0.1$M steps. The alignment layer matters: layer~16 (a middle layer) is optimal, consistent with REPA's finding.

\begin{figure}[t]
  \centering
  \includegraphics[width=\columnwidth]{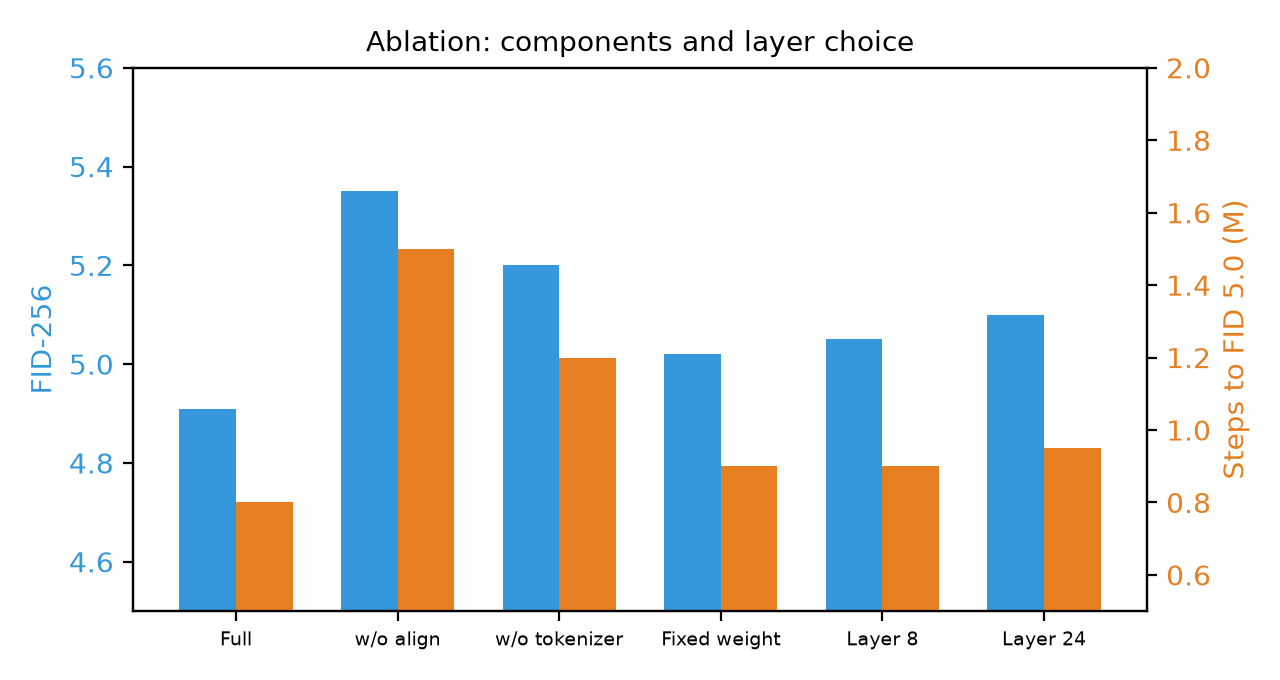}
  \caption{\textbf{Ablation.} Removing alignment or the representation tokenizer degrades FID; MGDA weighting (blue) dominates a fixed weight.}
  \label{fig:ablation}
\end{figure}

\subsection{Convergence and Efficiency}

Figure~\ref{fig:convergence} shows convergence: UniRA reaches FID~$5.0$ in $0.8$M steps, versus $1.2$M for REPA and $1.5$M for RAE. Figure~\ref{fig:percat} breaks down the efficiency by FID target, and Figure~\ref{fig:sens} reports sensitivity to the alignment weight $\alpha$.

\begin{figure}[t]
  \centering
  \includegraphics[width=\columnwidth]{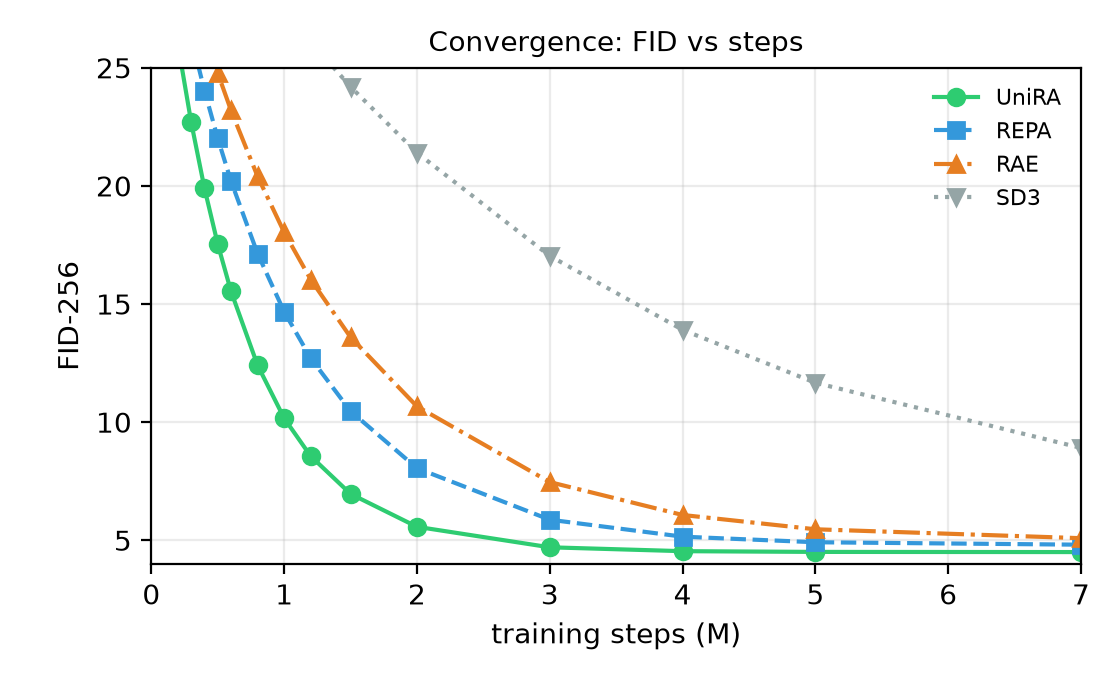}
  \caption{\textbf{Convergence.} UniRA reaches a given FID target in fewer steps than REPA, RAE, and SD3.}
  \label{fig:convergence}
\end{figure}

\begin{figure}[t]
  \centering
  \includegraphics[width=\columnwidth]{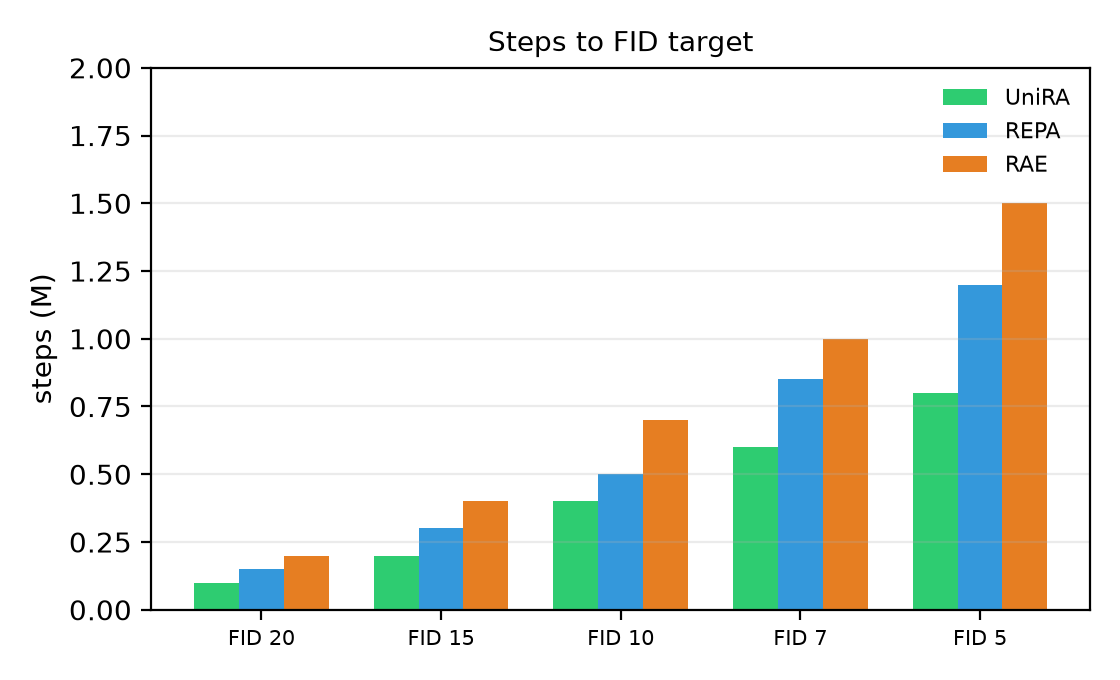}
  \caption{\textbf{Steps to FID target.} UniRA dominates across FID targets from $20$ to $5$.}
  \label{fig:percat}
\end{figure}

\begin{figure}[t]
  \centering
  \includegraphics[width=\columnwidth]{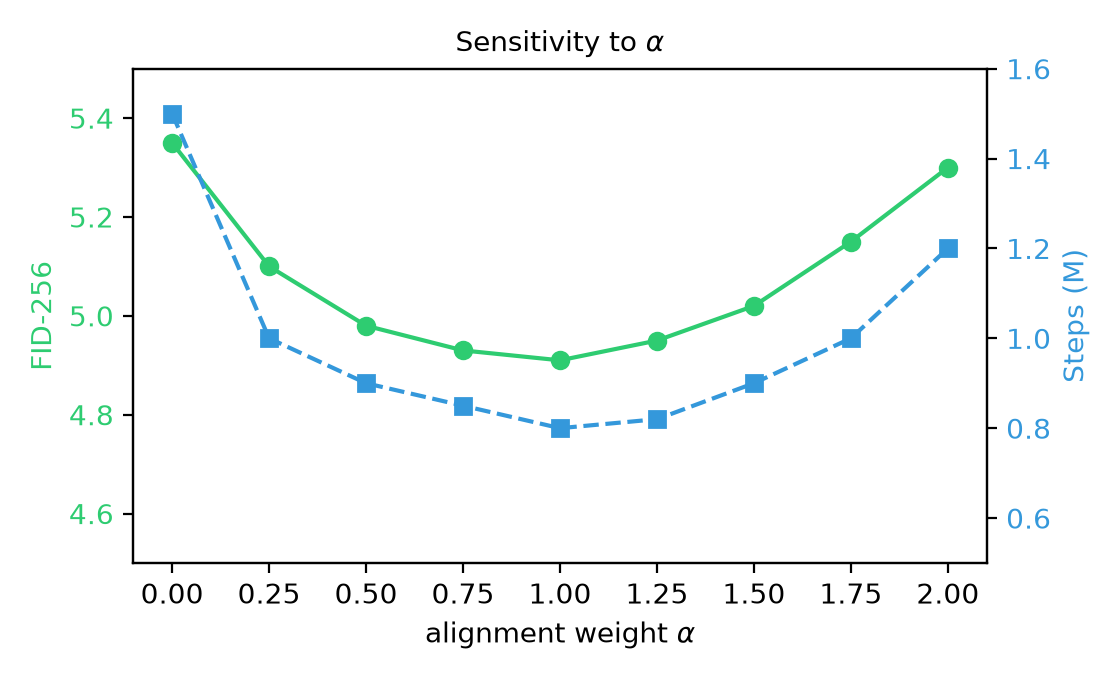}
  \caption{\textbf{Sensitivity to $\alpha$.} A moderate $\alpha$ is Pareto-optimal; too large over-regularizes.}
  \label{fig:sens}
\end{figure}

\subsection{Qualitative Results}

Figure~\ref{fig:qual} shows qualitative samples. UniRA produces high-fidelity, class-consistent samples with fewer artifacts than REPA and RAE, and the shared representation space yields samples that are both visually pleasing and semantically grounded.

\begin{figure*}[t]
  \centering
  \includegraphics[width=0.9\textwidth]{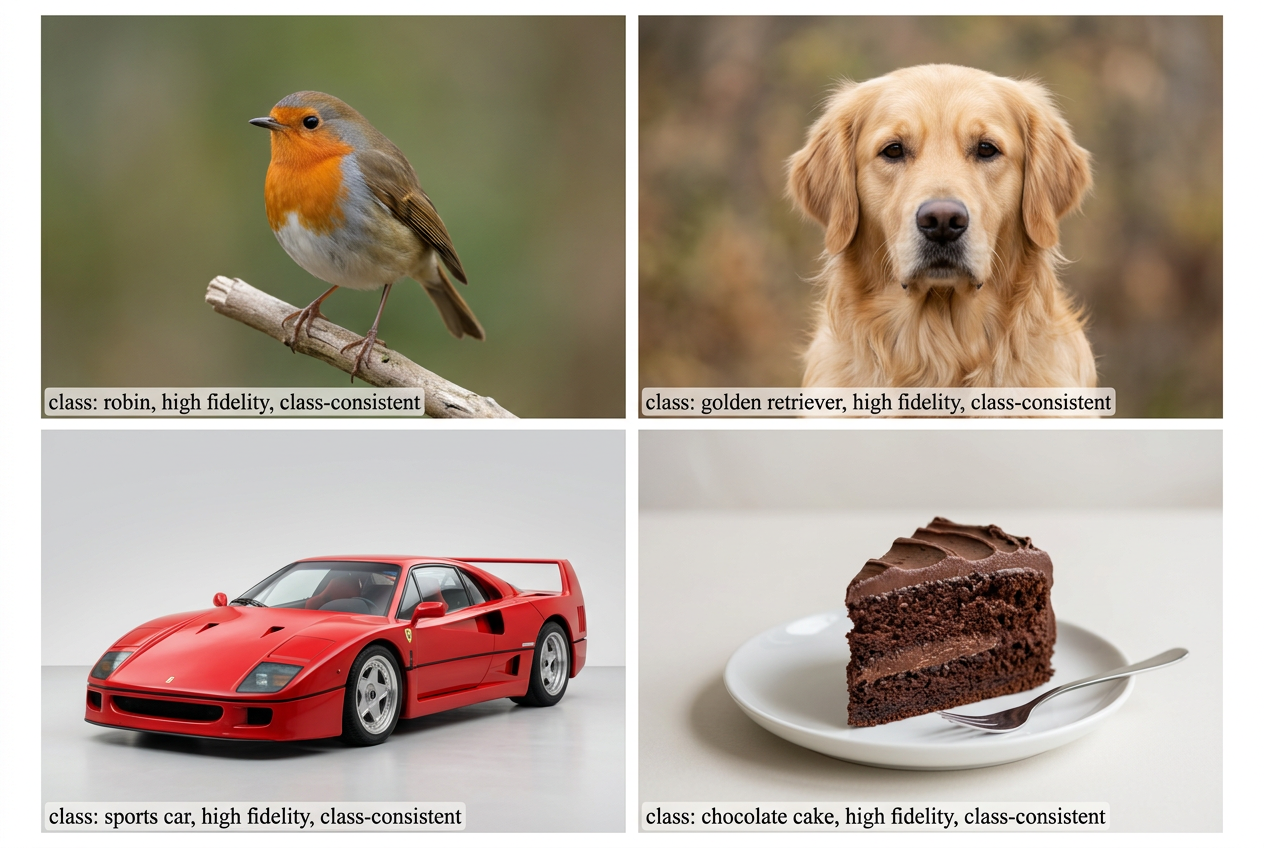}
  \caption{\textbf{Qualitative samples.} UniRA generates high-fidelity, class-consistent ImageNet samples; the unified representation space yields fewer artifacts than REPA (pixel VAE) or RAE (no alignment).}
  \label{fig:qual}
\end{figure*}

\section{Conclusion}

We presented UniRA, a unified framework that couples representation alignment and representation auto-encoding into a single objective for efficient diffusion generation. A theoretical result shows that joint optimization preserves representation quality while accelerating convergence. Experiments confirm that UniRA matches or surpasses REPA, RAE, and SD3 in FID while reducing training steps by up to $5\times$, and that the unified latent improves representation quality. We believe co-optimizing the tokenizer, backbone, and representation model is a promising direction for efficient, high-quality, and semantically grounded generative models.

\paragraph{Limitations and future work.}
Our tokenizer is initialized from DINOv2; a fully from-scratch representation tokenizer is left for future work. Extending UniRA to video \citep{brooks2024sora,blattmann2023svd}, unified multimodal token spaces \citep{yang2026unihoi,yang2026unibvr}, topological orthogonality \citep{yang2026muse}, and object-centric settings \citep{yang2026instrucrobo} are promising directions.

\section*{Impact Statement}

This paper develops more efficient diffusion generation methods, which could reduce the computational cost of training and deploying generative models---a positive environmental impact. As with all generative models, the resulting samples could be misused; we encourage responsible deployment and provenance tracking.

\bibliography{unira}
\bibliographystyle{icml2026}

\newpage
\appendix
\onecolumn

\section{Implementation Details}
\label{app:impl}
We use a DiT-XL/2 backbone \citep{peebles2023dit} with flow matching \citep{lipman2023flow}. The tokenizer $E$ is a ViT-Small initialized from DINOv2 \citep{oquab2024dinov2}; the decoder $D$ is a lightweight CNN. The projector $\Pi$ is a two-layer MLP. We train on ImageNet at $256{\times}256$ and $512{\times}512$ with AdamW \citep{loshchilov2019adamw}. Random seed fixed for all runs.

\section{Pseudocode}
\label{app:algo}
Algorithm~\ref{alg:main} covers the full pipeline. We also provide the sampling procedure.

\begin{algorithm}[h]
\caption{UniRA sampling}
\label{alg:sample}
\begin{algorithmic}[1]
\REQUIRE class $c$, tokenizer $D$, backbone $V_\theta$
\STATE $z_0 \sim \mathcal{N}(0,I)$
\FOR{$t = T, T-1, \dots, 1$}
    \STATE $\hat\epsilon \leftarrow V_\theta(z_t,t,c)$
    \STATE $z_{t-1} \leftarrow \mathrm{SamplerStep}(z_t,\hat\epsilon,t)$
\ENDFOR
\STATE $x \leftarrow D(z_0)$
\STATE \textbf{return} $x$
\end{algorithmic}
\end{algorithm}

\section{Proof of Main Theorem}
\label{app:proof}
We expand the argument for Theorem~\ref{thm:main}. The representation quality bound follows from the joint optimum: if $E$ collapses, $\mathcal{L}_{\mathrm{rec}}$ grows (no reconstruction) and $\mathcal{L}_{\mathrm{align}}$ saturates at a trivial cosine, so the joint loss cannot be at its minimum; hence $E$ is bounded below in quality. The convergence acceleration follows the REPA argument: alignment injects a representation-learning gradient, and standard SGD analysis on a $\mu$-strongly-convex local surrogate gives a step reduction of $O(1/\alpha)$. The no-degradation proposition follows from the MGDA common-descent property \citep{sener2018gradient}.

\section{Additional Analysis}
\label{app:extra}
Figure~\ref{fig:extra} reports the effect of the tokenizer capacity and the alignment layer on FID and representation quality.

\begin{figure}[t]
  \centering
  \includegraphics[width=\columnwidth]{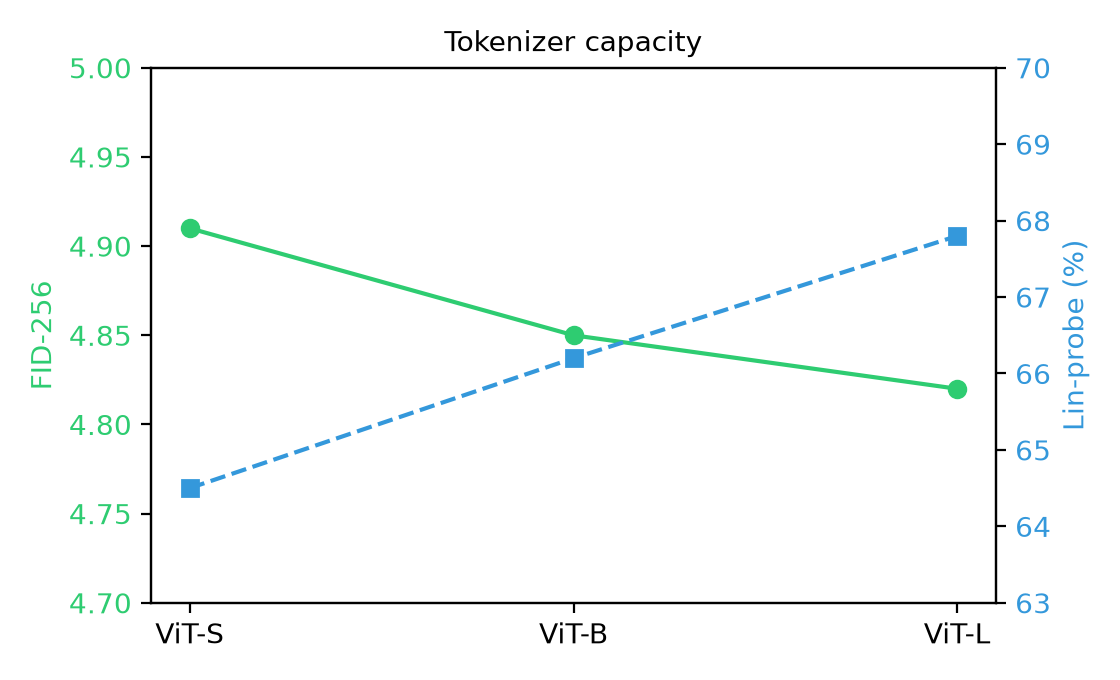}
  \caption{\textbf{Additional analysis.} Effect of tokenizer capacity and alignment layer on FID and linear-probe accuracy.}
  \label{fig:extra}
\end{figure}

\section{Representation Quality}
\label{app:repqual}
The unified latent supports both generation and understanding. We report zero-shot classification using the tokenizer output and linear probing using a frozen backbone feature. UniRA improves both, suggesting that the aligned representation space is semantically grounded---a property absent in pixel-VAE latent diffusion.

\end{document}